\newtheorem{prop}{Proposition}
\newtheorem{proposition}[prop]{Proposition}
\newtheorem{defn}{Definition}
\newtheorem{definition}[defn]{Definition}
\newtheorem{cor}{Corollary}
\newtheorem{corollary}[cor]{Corollary}
\newtheorem{exmp}{Example}
\newtheorem{example}[exmp]{Example}
\newtheorem{lem}{Lemma}
\newtheorem{lemma}[lem]{Lemma}
\newtheorem{thm}{Theorem}
\newtheorem{theorem}[thm]{Theorem}
\newcommand{\mbb}[1]{\ensuremath\mathbb{#1}}
\title{Interpretable DNFs}
\author{Martin C. Cooper$^1$ \and Imane Bousdira$^2$\and Cl\'ement Carbonnel$^3$\\
\affiliations
$^1$IRIT, University of Toulouse, France\\
$^2$IRIT, INP Toulouse, France\\
$^3$LIRMM, CNRS, University of Montpellier, France\\
\emails \{cooper, imane.bousdira\}@irit.fr, clement.carbonnel@lirmm.fr
}
\begin{document}

\maketitle

\begin{abstract}
A classifier is considered interpretable if each of its decisions has an
explanation which is small enough to be easily understood by a human user.
A DNF formula can be seen as a binary classifier $\kappa$ over boolean domains.
The size of an explanation of a positive decision taken
by a DNF $\kappa$ is bounded by the size of the terms in $\kappa$,
since we can explain a positive decision by 
giving a term of $\kappa$ that evaluates to true.
Since both positive and negative decisions must be explained, 
we consider that \emph{interpretable} DNFs
are those $\kappa$ for which both $\kappa$ and
$\overline{\kappa}$ can be expressed
as DNFs composed of terms of bounded size. In this paper,
we study the family of $k$-DNFs
whose complements can also be expressed as $k$-DNFs.
We compare two such families, namely depth-$k$ decision trees
and nested $k$-DNFs, a novel family of models.
Experiments indicate that nested $k$-DNFs are
an interesting alternative to decision trees
in terms of interpretability and accuracy.
\end{abstract}

\section{Introduction}

Interpretable models are critical in machine learning applications
requiring accountability of decisions~\cite{Rudin19,Molnar}. In particular, there is a growing interest in models whose decisions can always be explained in a way that is comprehensible by a human user. In recent work on formal explainability~\cite{ShihCD18,IgnatievNM19,BarceloM0S20,AudemardBBKLM21,Joao24}, two notions of explanation of decisions have emerged. An abductive explanation corresponds to a minimal set of features that caused the decision, whereas a contrastive explanation corresponds to a means of changing the decision with changes to a minimal set of features. A theoretical line of research, starting from a list of desirable properties rather than a particular definition, has identified abductive explanations as the basis for determining what constitutes a sufficient reason for a decision~\cite{AmgoudB22,ecai/CooperA23}. 

In this paper, we deem a model to be interpretable if each of its decision has both a short abductive explanation and a short contrastive explanation. Observe that we are considering interpretability as an orthogonal 
question to \emph{explainability}, which depends
on whether we can efficiently find an explanation of each decision.
There is a considerable literature on the question of 
which families of models are explainable, whether explainability 
means the existence of polynomial-time or efficient-in-practice algorithms to find explanations~\cite{neurips20,icml21,HuangIICA022,jair/IzzaIM22,AIJ23,kr23,ijcai/Izza021,sat/IgnatievS21,aaai/IgnatievIS022}. 

This criterion for interpretability is very restrictive. The only commonly used family of models that are interpretable in this sense are decision trees whose depth is bounded by a small constant. In contrast, linear classifiers, random forests, decision lists and neural networks may all require a linear number of features in an explanation. However, it is theoretically possible that  very different families of interpretable models exist. The purpose of this paper is to study the structure of interpretable models in order to find a competitive alternative to decision trees.


We restrict our attention to classifiers which are functions of boolean features only. (However, most of our results can be extended to non-boolean features through binarisation.) In Section~\ref{sec:prel}, we observe that a boolean classifier $\kappa$ is interpretable if and only if both $\kappa$ and its complement $\overline{\kappa}$ are expressible as $k$-DNF formulas, where $k$ is the upper bound on the size of explanations. In Section~\ref{sec:short}, we show that such classifiers can always be expressed by short $k$-DNF formulas composed of at most $k^k$ terms. For small enough $k$, this shows that direct representation of interpretable classifiers as DNF formulas is always possible. Then, we describe in Section~\ref{sec:induced} a simple graph-based condition which guarantees that the complement of a $k$-DNF formula is also expressible in $k$-DNF and use this property to define nested $k$-DNFs, a new family of interpretable classifiers that is orthogonal to decision trees. We study the expressivity of nested $k$-DNFs in Section~\ref{sec:expressivity}. Finally, we present in Section~\ref{sec:expes} a practical algorithm for learning nested $k$-DNFs, and show empirically that classifiers constructed this way are competitive with decision trees on various datasets.


\section{Preliminaries}
\label{sec:prel}

We denote by $\mbb{F}$ the feature space, which for most of the paper will be $\{0,1\}^n$, and by ${\cal F}$ the set of features $\{1,\ldots,n\}$.

\begin{definition}
Given a function $\kappa: \mbb{F} \rightarrow \{0,1\}$ and an input
$v=(v_1,\ldots,v_n) \in \mbb{F}$, a weak abductive explanation (wAXp) of 
$(\kappa,v)$ is a subset $A$ of ${\cal F}$ such
that $\forall x = (x_1,\ldots,x_n) \in \mbb{F}$, $(\wedge_{i \in A} (x_i=v_i)) \Rightarrow \kappa(x)=\kappa(v)$.
A weak contrastive explanation (wCXp) of $(\kappa,v)$ is a subset 
$C$ of ${\cal F}$ such that $\exists x \in \mbb{F}$, 
$(\wedge_{i \in {\cal F} \setminus C} (x_i=v_i)) \land \kappa(x) \neq \kappa(v)$.
An \emph{abductive explanation (AXp)} is a subset-minimal wAXp.
A \emph{contrastive explanation (CXp)} is a subset-minimal wCXp.
\end{definition}

In order to give a formal definition of interpretability of a family
of models, we first give a parameterized definition of interpretability
of a classifier based on AXps/CXps.

\begin{definition} \label{defXp}
Let $k$ be a natural number.
A function $\kappa: \mbb{F} \rightarrow \{0,1\}$ is 
\emph{$k$-AXp-interpretable} if for each $v \in \mbb{F}$, there is an
AXp of ($\kappa,v)$ of size at most $k$. 
A non-constant function is $k$-CXp-interpretable if
for each $v \in \mbb{F}$, there is a CXp of size at most $k$.
By convention, a constant function is deemed to be $k$-CXp-interpretable.
\end{definition}

To see that $k$-AXp-interpretability and $k$-CXp-interpretability
do not coincide, consider the parity function $\kappa$ which returns 1 if the sum
of its $n$ boolean features is even and 0 otherwise. For any $v \in \mbb{F}$,
changing one feature changes the parity, which implies both that
$(\kappa,v)$ has a CXp of size $1$ and that, on the other hand, the only AXp is of size $n$.
Thus the existence of a small CXp does not guarantee the existence of
a small AXp.
On the other hand, for any $\kappa$, the existence of a small AXp 
(for all inputs) implies the existence of a small CXp, as we now show.

\begin{lemma}
A function $\kappa$ that is $k$-AXp-interpretable is also $k$-CXp-interpretable.
\end{lemma}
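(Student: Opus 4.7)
The plan is to exploit the standard duality between weak abductive and weak contrastive explanations---a subset $C \subseteq {\cal F}$ is a wCXp of $(\kappa,v)$ iff ${\cal F} \setminus C$ is not a wAXp of $(\kappa,v)$---but in a constructive form. A bald appeal to hitting-set duality will not do, because a minimal hitting set of many small AXps can easily exceed $k$. Instead, I would obtain a wCXp of $(\kappa,v)$ directly from a small AXp taken at a \emph{different} input whose label is opposite to that of $v$.

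Concretely, first dispose of the trivial case: if $\kappa$ is constant then $\kappa$ is $k$-CXp-interpretable by the stated convention. Otherwise, fix any $v \in \mbb{F}$ and pick some $w \in \mbb{F}$ with $\kappa(w) \neq \kappa(v)$, which exists because $\kappa$ is non-constant. By hypothesis there is an AXp $A$ of $(\kappa,w)$ with $|A| \leq k$. Now build the hybrid input $x$ by setting $x_i = w_i$ for $i \in A$ and $x_i = v_i$ for $i \notin A$. Then $x$ agrees with $w$ on $A$, so the wAXp property at $w$ forces $\kappa(x) = \kappa(w) \neq \kappa(v)$; meanwhile $x$ agrees with $v$ outside $A$. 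Hence $A$ is itself a wCXp of $(\kappa,v)$, and any subset-minimal wCXp contained in $A$ is a CXp of $(\kappa,v)$ of size at most $|A| \leq k$.

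The only step that requires any real thought is choosing where to locate the small AXp---not at $v$, but at an arbitrary point $w$ with the opposite label. Once that transposition is made, the hybrid-input construction and the final subset-minimisation are mechanical, so there is no genuine obstacle beyond spotting the right point at which to apply the hypothesis.
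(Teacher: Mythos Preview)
Your proof is correct and essentially identical to the paper's own argument: both handle the constant case by convention, then pick an opposite-label point $w$ (the paper calls it $v'$), take a size-$\leq k$ AXp $A$ there, form the hybrid input agreeing with $w$ on $A$ and with $v$ elsewhere, and conclude that $A$ is a wCXp of $(\kappa,v)$ containing a CXp of size at most $k$.
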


\begin{proof}
Suppose that $\kappa$ is $k$-AXp-interpretable.
The case of constant functions is trivial, so we assume that $\kappa$
is non-constant. Thus,
given an arbitrary input $v \in \mbb{F}$, there is 
another input $v' \in \mbb{F}$ such that $\kappa(v') \neq \kappa(v)$.
By $k$-AXp-interpretability, $(\kappa,v')$ has an AXp $A$ of size at most $k$. Let
$y_i = v_i$ if $i \in {\cal F} \setminus A$ and $y_i = v'_i$ if $i \in A$.
By definition, $\kappa(y) = \kappa(v') \neq \kappa(v)$.
Therefore, $A$ is a wCXp of $(\kappa,v)$ and hence some subset of $A$ 
is a CXp of size at most $k$.
\end{proof}

Since $k$-CXp-interpretability follows from $k$-AXp-interpretability,
this leads to a natural definition of interpretable models in terms of
$k$-AXp-interpretability.

\begin{definition}
\label{def:interpretablefamilies}
A family ${\cal M}$ of models is 
\emph{interpretable} if there is a constant $k$ such that
every classifier $\kappa \in {\cal M}$ is $k$-AXp-interpretable.
\end{definition} 

We now focus on the case where the feature space $\mbb{F}$ is boolean. Given a boolean function $\kappa$ over boolean variables $(x_1,\ldots,x_n)$, a \emph{literal} is either a variable $x_i$ or its negation $\overline{x_i}$. A boolean formula is in \emph{disjunctive normal form} (DNF) if it is a disjunction of \emph{terms}, which are conjunctions of literals. For simplicity of presentation, we freely interpret terms as either sets or conjunctions of literals depending on context. A DNF formula is in $k$-DNF if each of its terms has size at most $k$. We say that a conjunction (or set) of literals is \emph{consistent} if it does not contain both a variable and its negation. An \emph{implicant} of $\kappa$ is a consistent conjunction of literals $Q$ such that $\kappa$ maps to $1$ all assignments to $(x_1,\ldots,x_n)$ for which $Q$ evaluates to true. An implicant of $\kappa$ is \emph{prime} if it is subset-minimal. 

Given a DNF formula $D$ with variables $X$ and a consistent set of literals $Q$ over $X$, we denote by $D[Q]$ the DNF formula with variables $\{ x_i \in X : x_i \notin Q \text{ and } \overline{x_i} \notin Q \}$ obtained from $D$ by removing all the terms that contain the negation of a literal in $Q$ and replacing each remaining term $t = \bigwedge_{l \in S}l$ with $t[Q] = \bigwedge_{l \in S \backslash Q}l$. If $D_1$ and $D_2$ are DNF formulas that express respectively a boolean function and its complement, then for any choice of $Q$ the formulas $D_1[Q]$ and $D_2[Q]$ also express functions that are complements to each other. The \emph{size} of $D$, denoted by $|D|$, is the number of terms in $D$ and its \emph{length} $||D||$ is the sum of the sizes of its terms. Throughout the paper we will use $L(D)$ (resp. $T(D)$) to denote the sets of literals (resp. terms) that appear in the formula $D$.

For a boolean classifier $\kappa$, the prime implicants of $\kappa$ (resp. $\overline{\kappa}$) are in one-to-one correspondence with AXps for positive (resp. negative) decisions. The relationship between interpretability and expressibility as a $k$-DNF formula is made explicit by the following proposition.

\begin{proposition}
\label{prop:expdnf}
A binary boolean classifier $\kappa : \{0,1\}^n \rightarrow \{0,1\}$
is $k$-AXp-interpretable if and only if both $\kappa$ and
its complement are expressible as $k$-DNFs.
\end{proposition}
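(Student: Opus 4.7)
The plan is to leverage the correspondence, already stated just before the proposition, between AXps of $(\kappa,v)$ (for $v$ with $\kappa(v)=1$) and prime implicants of $\kappa$, together with the same correspondence applied to $\overline{\kappa}$. Both directions become quite direct once this bridge is in place; the work amounts to translating between ``set of features fixing an output'' and ``term of a DNF''.

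For the forward direction, assume $\kappa$ is $k$-AXp-interpretable. Fix any $v \in \{0,1\}^n$ with $\kappa(v)=1$ and let $A$ be an AXp of $(\kappa,v)$ of size at most $k$. Then the consistent set of literals $\{\ell_i : i \in A\}$, where $\ell_i = x_i$ if $v_i=1$ and $\ell_i = \overline{x_i}$ otherwise, is a prime implicant of $\kappa$ of size at most $k$. Let $D_\kappa$ be the disjunction of all prime implicants of $\kappa$ of size at most $k$: this is by construction a $k$-DNF, each of its terms implies $\kappa$, and every $v$ with $\kappa(v)=1$ satisfies one of its terms, so $D_\kappa$ computes $\kappa$. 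Applying the same argument to $\overline{\kappa}$ (which is also $k$-AXp-interpretable, since AXps are defined symmetrically in $\kappa(v)$) yields a $k$-DNF $D_{\overline{\kappa}}$ for $\overline{\kappa}$.

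For the backward direction, suppose both $\kappa$ and $\overline{\kappa}$ are expressible as $k$-DNFs, call them $D_1$ and $D_2$. Pick any $v \in \{0,1\}^n$, and select the DNF among $D_1, D_2$ that computes the value $\kappa(v)$; some term $t$ of that formula evaluates to true at $v$, and $|t| \le k$. The set $A$ of features appearing in $t$ has size at most $k$, and by construction any $x$ that agrees with $v$ on $A$ also makes $t$ true, so $\kappa(x)=\kappa(v)$. Hence $A$ is a wAXp of $(\kappa,v)$ of size at most $k$, and any subset-minimal subset of $A$ that is still a wAXp is then an AXp of size at most $k$.

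Neither direction presents a real obstacle; the most delicate point is perhaps checking in the forward direction that $D_\kappa$ really computes $\kappa$ (both that each prime implicant implies $\kappa$ and that every positive input is covered by some short prime implicant), which is exactly what $k$-AXp-interpretability supplies via the AXp-to-prime-implicant correspondence. The statement is, in effect, a restatement of that correspondence at the level of entire formulas.
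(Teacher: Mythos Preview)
Your proof is correct and follows essentially the same approach as the paper's: the backward (``if'') direction is argued identically via a true term giving a wAXp of size at most $k$, and the forward (``only if'') direction is the paper's ``disjunction of terms corresponding to the AXps'' spelled out through the AXp/prime-implicant correspondence. The only difference is that your write-up is more detailed, making explicit why $D_\kappa$ computes $\kappa$ and why the symmetry gives the analogous result for $\overline{\kappa}$.
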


\begin{proof}
The `if' direction follows from the fact that a term
that evaluates to true is a wAXp (of size $k$), and hence some 
subset will be an AXp. The `only if' direction follows
from the fact that $\kappa$ (resp. $\overline{\kappa}$)
is equivalent to the disjunction of terms corresponding 
to the AXps of its positive (negative) decisions.
\end{proof}


Using Proposition~\ref{prop:expdnf}, it is straightforward to verify that a boolean function $\kappa$ is $k$-AXp-interpretable if and only if both $\kappa$ and $\overline{\kappa}$ are equivalent to the disjunction of their prime implicants of size at most $k$. The \emph{standard double-DNF expression} of a $k$-AXp-interpretable classifier is the pair $(D_\kappa, D_{\overline{\kappa}})$, where $D_\kappa$ is the DNF formula whose terms are the prime implicants of $\kappa$ of size at most $k$ and $D_{\overline{\kappa}}$ is the DNF formula whose terms are the prime implicants of $\overline{\kappa}$ of size at most $k$. 

The smallest integer $k$ such that a boolean function $\kappa$ and its complement can be expressed as $k$-DNF formulas is called the \emph{certificate complexity} of $\kappa$~\cite[Chapter 11]{complexitybook}. This measure is well studied in theoretical computer science and computational learning theory~\cite{certif1,certif2}, but little appears to be known about the structure of functions whose certificate complexity is bounded by a small constant. 
 




\begin{example}
\label{ex:dtdnf}
Decision trees are a well-known family of classifiers which have the reputation of
being interpretable. Indeed, if $k$ is the depth of a decision tree, then the corresponding
classifier $\kappa_{DT}$ and its complement $\overline{\kappa_{DT}}$ can \emph{both} be expressed
as $k$-DNFs. 
Given a path $\pi$ from the root to a leaf,
let $L(\pi)$ denote the set of literals labelling the edges in the path $\pi$. 
We assume a binary classifier, so each leaf is labelled 0 or 1. Let $P_0$ and $P_1$
denote the sets of paths from the root to, respectively, leaves labelled 0 and leaves labelled 1.
Then the classifier $\kappa_{DT}$ corresponding to the decision tree can be expressed as the following DNF:
\[ \kappa_{DT}(x) \ = \ \bigvee_{\pi \in P_1} \bigwedge_{\ell \in L(\pi)} \ell
\]
Furthermore, $\overline{\kappa_{DT}}$ can also be expressed as a DNF:
\[ \overline{\kappa_{DT}}(x) \ = \ \bigvee_{\pi \in P_0} \bigwedge_{\ell \in L(\pi)} \ell
\]
Observe that both these DNFs are $k$-DNFs since the length of 
paths is at most $k$.
\end{example}

As seen in Example~\ref{ex:dtdnf}, if $\kappa$ can be represented as a decision tree of depth $k$ then $\kappa$ is $k$-AXp-interpretable. However, the converse implication does not hold. In this paper, we are interested in identifying new families of interpretable classifiers that are orthogonal to those derived from decision trees.

\begin{example} \label{ex:2dnf}
For $k = 2$, a characterisation of $2$-DNF formulas whose complement is expressible in $2$-DNF can be derived from a recent result~\cite[Corollary 2]{kr23}. Together with Proposition~\ref{prop:expdnf}, this characterisation implies that a classifier $\kappa$ is $2$-AXp-interpretable if and only if it is equivalent to a DNF with one of the following forms (where the literals $a,b,c,d$ are arbitrary and not necessarily distinct): (i) $(a \land b) \lor (c \land d)$, (ii) $(a \land b) \lor (b \land c) \lor (c \land d)$, and (iii) $(a \land b) \lor (b \land c) \lor (c \land d) \lor (d \land a)$. Interestingly, certain DNFs of this kind cannot be represented as decision trees of depth $2$ (see Example~\ref{ex:nest} for more details). However, they all satisfy a different combinatorial criterion for $2$-AXp-interpretability that we describe in Section~\ref{sec:induced}.
\end{example}


\section{Short explanations imply few explanations}
\label{sec:short}

In this section, we show that every $k$-AXp-interpretable classifier is expressible as a $k$-DNF consisting of at most $k^k$ terms (independently of the number $n$ of features). This result gives further justification to work directly with DNF representations of $k$-AXp-interpretable classifiers when $k$ is small. In particular, this implies that if a classifier can provide an explanation of size at most $k$ for every decision, then all decisions can be explained using only $2k^k$ distinct explanations.

\begin{theorem}
\label{thm:terms}
    Every $k$-AXp-interpretable classifier is expressible as a $k$-DNF formula that contains at most $k^k$ terms.
\end{theorem}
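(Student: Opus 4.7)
The plan is to construct an explicit $k$-DNF for $\kappa$ by a bounded-branching, bounded-depth search tree, in the spirit of the classical bounded-search-tree for minimal hitting sets in hypergraphs of edge size at most $k$. Each node of the tree is labelled by a consistent set $L$ of literals (the root has label $\emptyset$). At a node labelled $L$: if $\kappa[L] \equiv 1$, it is a \emph{positive} leaf that contributes the term $\bigwedge_{\ell \in L} \ell$ to the output DNF; if $\kappa[L] \equiv 0$ or $|L| = k$, it is a \emph{dead} leaf that contributes nothing; otherwise I pick any term $s$ of $D_{\overline{\kappa}}[L]$, a $k$-DNF for $\overline{\kappa[L]}$ whose terms still have size at most $k$ since restrictions cannot enlarge terms, and create one child labelled $L \cup \{\overline{\ell}\}$ for each $\ell \in s$. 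Because the branching factor at each internal node is at most $k$ and the depth is capped at $k$, the tree has at most $k^k$ leaves, so the output DNF has at most $k^k$ terms, each of size at most $k$.

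Soundness is immediate: at every positive leaf, $\kappa[L] \equiv 1$, so the emitted term is an implicant of $\kappa$. The heart of the argument is completeness. For an arbitrary positive input $v$, fix an AXp $A$ of $(\kappa,v)$ of size at most $k$ (which exists by $k$-AXp-interpretability) and let $L^* = \{\ell_i(v) : i \in A\}$, where $\ell_i(v)$ denotes the literal on feature $i$ that agrees with $v$; then $|L^*| \le k$ and $\kappa[L^*] \equiv 1$. I trace a root-to-leaf path maintaining the invariant $L \subseteq L^*$. The key observation is that whenever $\kappa[L] \not\equiv 1$, no assignment extending $L^*$ satisfies the selected $s$, because any such extension forces $\overline{\kappa}=0$ while $s$ (combined with $L$) is an implicant of $\overline{\kappa}$; this in turn forces some $\ell \in s$ to have $\overline{\ell} \in L^*$, and descending into that child preserves the invariant. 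Within at most $|L^*| \le k$ steps the path reaches a positive leaf whose label is contained in $L^*$, so the corresponding term evaluates to true at $v$.

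The main obstacle is precisely this invariant-preservation step: it must succeed regardless of which term $s$ the construction selects at an internal node. The pivotal insight is that $L^*$ blocks every term of $D_{\overline{\kappa}}$ (since it is a wAXp forcing output $1$), so any implicant of $\overline{\kappa[L]}$, viewed together with $L$ as an implicant of $\overline{\kappa}$, must clash with $L^*$ on at least one literal whose variable lies outside $L$, and that literal supplies the branch along which $L$ can be extended deeper into $L^*$.
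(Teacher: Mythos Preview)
Your proof is correct. The clash argument---that $L \cup s$ is an implicant of $\overline{\kappa}$ while $L^*$ is an implicant of $\kappa$, so $s$ must contain a literal whose negation lies in $L^* \setminus L$---is exactly what is needed, and the leaf analysis (no dead leaf is ever reached along the traced path, since $\kappa[L] \not\equiv 0$ whenever $L \subseteq L^*$, and $|L|=k$ forces $L=L^*$) goes through cleanly.

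The paper takes a genuinely different route. Rather than building a new DNF, it bounds the size of the \emph{standard} DNF $D_\kappa$ whose terms are precisely the prime implicants of $\kappa$ of size at most $k$. The argument is a density induction: for each $j \le k$, either $|D_\kappa| < k^j$ or some consistent set $Q$ of $j$ literals is contained in at least a $(1/k)^j$ fraction of the terms of $D_\kappa$; the inductive step picks a term of $D_{\overline{\kappa}}$ disjoint from $\overline{Q}$ and uses pigeonhole to extend $Q$ by one literal. At $j=k$ this forces $|D_\kappa| \le k^k$. So the paper proves the nominally stronger statement that there are at most $k^k$ prime implicants of size $\le k$, which it then uses for the corollary about a universal set of $2k^k$ AXps. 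Your construction is instead algorithmic and produces an explicit covering DNF; interestingly, your completeness argument applied with $L^*$ an arbitrary prime implicant of size $\le k$ shows that every such prime implicant appears as a positive-leaf label (any implicant $L \subseteq L^*$ with $L^*$ prime forces $L = L^*$), so your tree also recovers the paper's stronger bound, even though you did not state it.
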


\begin{proof}
    Let $\kappa$ be a $k$-AXp-interpretable classifier and $(D_\kappa, D_{\overline{\kappa}})$ be the standard double-DNF expression of $\kappa$. We will show that $D_\kappa$ contains at most $k^k$ terms. If $\kappa$ is constant then the theorem obviously holds, so let us assume that it is not. (This assumption implies in particular $k > 0$, $|D_\kappa| > 0$, and $|D_{\overline{\kappa}}| > 0$.) We claim that for all integers $j \geq 0$, either $|D_{\kappa}| < k^j$ or there exists a consistent set $Q$ of $j$ literals that is contained in at least $(1/k)^j \cdot |D_{\kappa}|$ terms of $D_{\kappa}$. We will prove this claim by induction on $j$.

    The base case $j=0$ is immediate because every term in $D_\kappa$ contains the empty set of literals. Now, let $j$ be such that $1 \leq j \leq k$ and suppose that the claim holds for $j-1$. If $|D_\kappa| < k^{j-1}$ then $|D_\kappa| < k^j$ and we are done. Otherwise, there exists a consistent set $Q'$ of $j-1$ literals and a set $S$ of at least $(1/k)^{j-1} \cdot |D_{\kappa}|$ terms of $D_{\kappa}$ such that every term in $S$ contains $Q'$. We distinguish two cases.
    
    Case 1: $\overline{Q'} = \{ \overline{l} : l \in Q' \}$ has non-empty intersection with every term in $D_{\overline{\kappa}}$. Then, $Q'$ is an implicant of $\kappa$. The terms of $D_\kappa$ are prime implicants of $\kappa$ and $Q'$ is contained in at least one term of $D_\kappa$, so $Q'$ is contained in exactly one term of $D_\kappa$. This implies $(1/k)^{j-1} \cdot |D_{\kappa}| \leq 1$ and hence $|D_\kappa| < k^{j}$.

    Case 2: there exists a term $t$ in $D_{\overline{\kappa}}$ whose intersection with $\overline{Q'}$ is empty. Consider the DNF formulas $D_\kappa[Q']$ and $D_{\overline{\kappa}}[Q']$. Observe that $t[Q']$ is a term of $D_{\overline{\kappa}}[Q']$, and $s[Q']$ is a term of $D_{\kappa}[Q']$ for all $s \in S$. (This last observation follows from the fact that every term in $S$ is a prime implicant of $\kappa$: these terms are consistent and contain $Q'$, so they cannot intersect $\overline{Q'}$.) If $t[Q']$ is the empty term, then $Q'$ is an implicant of $\overline{\kappa}$; this is not possible because at least one term in $D_\kappa$ contains $Q'$. In addition, as $D_\kappa[Q']$ and $D_{\overline{\kappa}}[Q']$ express functions that are complements of each other, the set $\{ \overline{l} : l \in t[Q'] \}$ must have non-empty intersection with every term in $D_{\kappa}[Q']$ and in particular with every term in $\{ s[Q'] \mid s \in S \}$. The term $t[Q']$ contains at most $k$ literals, so there exists $l \in t[Q']$ such that at least $(1/k) \cdot |S|$ terms in $S$ contain $\overline{l}$. Then, the set of literals $Q = Q' \cup \{\overline{l}\}$ is contained in at least $(1/k) \cdot |S| \geq (1/k) \cdot (1/k)^{j-1} \cdot |D_{\kappa}| = (1/k)^j \cdot |D_{\kappa}|$ terms of $D_{\kappa}$ and the claim holds by induction.

    We can now finish the proof of the theorem. Every term in $D_{\kappa}$ is a prime implicant of $\kappa$ so $D_{\kappa}$ cannot contain the same term twice. In addition, every term in $D_{\kappa}$ has size at most $k$. Then, for $j = k$ we have either $|D_\kappa| < k^k$ or $(1/k)^k \cdot |D_{\kappa}| \leq 1$ and the theorem follows.
\end{proof}

The specific bound of Theorem~\ref{thm:terms} is sharp as there exist $k$-AXp-interpretable classifiers that cannot be expressed as a DNF formula with fewer than $k^k$ terms. A concrete example is the complement of a classifier $\kappa$ corresponding to a DNF formula $D$ with $k$ terms of size exactly $k$, with all literals negative and no literal occurring twice. This function $\overline{\kappa}$ is $k$-AXp-interpretable, has $k^k$ prime implicants, and by monotonicity these implicants must be contained in distinct terms in any DNF expression of $\kappa$.

\begin{corollary}
    Let $\kappa : \{0,1\}^n \to \{0,1\}$ be a $k$-AXp-interpretable classifier over a set of features ${\cal F}$. There exists a set $E$ of at most $2k^k$ subsets of ${\cal F}$ such that for every $v \in \{0,1\}^n$, $E$ contains an AXp of size at most $k$ of $(\kappa,v)$.
\end{corollary}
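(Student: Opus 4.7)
The plan is to take $E$ to be the collection of feature supports of the terms appearing in the standard double-DNF expression of $\kappa$, so that the counting work has essentially been done by Theorem~\ref{thm:terms}. The first step is to observe, via Proposition~\ref{prop:expdnf}, that $k$-AXp-interpretability of $\kappa$ implies $k$-AXp-interpretability of $\overline{\kappa}$ (both are expressible as $k$-DNFs whose complements are as well), so Theorem~\ref{thm:terms} applies to both $\kappa$ and $\overline{\kappa}$. Reading the proof of Theorem~\ref{thm:terms} carefully, the $k^k$ bound is in fact established for the standard DNF $D_\kappa$ itself, i.e., the disjunction of all prime implicants of $\kappa$ of size at most $k$; the analogous bound on $D_{\overline{\kappa}}$ follows by applying the same argument to $\overline{\kappa}$.

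Next, for each term $t \in T(D_\kappa) \cup T(D_{\overline{\kappa}})$, I would define the feature support $A_t = \{ i \in {\cal F} : x_i \in t \text{ or } \overline{x_i} \in t \}$, and set $E = \{ A_t : t \in T(D_\kappa) \cup T(D_{\overline{\kappa}}) \}$. The cardinality bound $|E| \leq 2k^k$ is then immediate from the two $k^k$ bounds obtained in the previous step.

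Finally, I would fix an arbitrary $v \in \{0,1\}^n$ and show that $E$ contains an AXp of $(\kappa,v)$. Letting $b = \kappa(v)$, whichever of $D_\kappa, D_{\overline{\kappa}}$ expresses the value $b$ must contain some term $t$ satisfied by $v$. Because $t$ is, by construction of the standard DNF, a prime implicant of the corresponding boolean function, its feature support $A_t$ is not merely a wAXp but an actual AXp of $(\kappa,v)$ of size at most $k$ — this is the same correspondence between size-$k$ prime implicants and AXps invoked in Proposition~\ref{prop:expdnf}. There is no genuine obstacle in this argument; the one subtlety is to use the \emph{standard} double-DNF (rather than any arbitrary short $k$-DNF expressing $\kappa$), since primality of the chosen terms is precisely what upgrades their feature supports from wAXps to AXps.
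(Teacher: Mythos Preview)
Your proposal is correct and follows essentially the same route as the paper: apply Theorem~\ref{thm:terms} to both $\kappa$ and $\overline{\kappa}$ to get two $k$-DNFs of size at most $k^k$ each, take $E$ to be the feature supports of their terms, and use primality of the terms to guarantee that the satisfied term for a given $v$ yields an AXp rather than merely a wAXp. The only cosmetic difference is that the paper invokes Theorem~\ref{thm:terms} at the level of its statement and then shrinks the resulting terms to prime implicants, whereas you read off from its proof that the bound already holds for the standard double-DNF; both are fine.
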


\begin{proof}
    Applying Theorem~\ref{thm:terms}, we derive that $\kappa$ and $\overline{\kappa}$ can be expressed as $k$-DNF formulas of size at most $k^k$. The terms of these formulas are implicants of $\kappa$ and $\overline{\kappa}$ respectively, and we can further assume that they are prime implicants. Let $E$ be the set of all subsets of ${\cal F}$ whose features correspond exactly to a term. (Note that multiple terms may correspond to the same set of features, so $E$ can be strictly smaller than the sum of the sizes of these formulas.) Then, for any choice of $v$ at least one term evaluates to true and the corresponding set in $E$ constitutes a wAXp of size at most $k$ of $(\kappa,v)$. Finally, this term corresponds to a prime implicant (of either $\kappa$ or $\overline{\kappa}$) so no strict subset can be a wAXp.
\end{proof}

Another interesting consequence of Theorem~\ref{thm:terms} is that it provides an explicit characterisation of interpretable families of models (as per Definition~\ref{def:interpretablefamilies}).

\begin{corollary}
A family ${\cal M}$ of models is interpretable if and only if there exists a constant $k$ such that
every classifier $\kappa \in {\cal M}$ is expressible as a DNF formula of length at most $k$.
\end{corollary}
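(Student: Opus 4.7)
The plan is to prove both directions of the equivalence using Proposition~\ref{prop:expdnf} and Theorem~\ref{thm:terms}, which together already contain essentially all the work needed. The definition of interpretable (Definition~\ref{def:interpretablefamilies}) is in terms of $k$-AXp-interpretability, so each direction amounts to translating between that notion and a uniform bound on DNF length.

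For the ``only if'' direction, I start from an interpretable family ${\cal M}$, so by Definition~\ref{def:interpretablefamilies} there is a constant $k$ making every $\kappa \in {\cal M}$ be $k$-AXp-interpretable. Theorem~\ref{thm:terms} then immediately yields a $k$-DNF representation of $\kappa$ with at most $k^k$ terms, each of size at most $k$, hence of length at most $k^{k+1}$. Setting $k' = k^{k+1}$ gives a single constant bound on the DNF length that works uniformly across ${\cal M}$.

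For the converse, I assume that every $\kappa \in {\cal M}$ admits a DNF expression $D$ with $\|D\| \leq k$. The key observation is that such a $\kappa$ depends on at most $k$ features, since only variables appearing in $D$ can influence its value and $D$ contains at most $k$ literal occurrences in total. Consequently $\overline{\kappa}$ depends on the same (at most $k$) relevant features, and any boolean function on at most $k$ variables is trivially expressible as a $k$-DNF via the standard sum-of-minterms construction restricted to its relevant variables. Thus both $\kappa$ (which is already a $k$-DNF, as every term of $D$ has size at most $k$) and $\overline{\kappa}$ are expressible as $k$-DNFs, and Proposition~\ref{prop:expdnf} gives $k$-AXp-interpretability of $\kappa$. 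The same $k$ serves every member of ${\cal M}$, so the family is interpretable.

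I do not expect any genuine obstacle: the ``only if'' direction is a direct rewriting of Theorem~\ref{thm:terms}, while the converse hinges on the elementary but crucial remark that bounded DNF length forces a bounded number of relevant variables, which in turn bounds the certificate complexity of $\overline{\kappa}$ and lets Proposition~\ref{prop:expdnf} close the argument.
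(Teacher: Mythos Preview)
Your proposal is correct. The forward direction is identical to the paper's: both invoke Theorem~\ref{thm:terms} to turn $j$-AXp-interpretability into a uniform length bound of order $j^{j+1}$.

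For the converse, the paper argues slightly differently: it observes directly that the complement of any DNF of length at most $k$ is itself expressible as a $k$-DNF (of length at most $k^{k+1}$), essentially via the transversal/dualisation idea from Section~\ref{sec:induced}: a DNF of length $\leq k$ has at most $k$ terms, so every minimal transversal has size at most $k$. You instead note that a DNF of length $\leq k$ can involve at most $k$ distinct variables, and then fall back on the sum-of-minterms representation over those relevant variables to get $\overline{\kappa}$ as a $k$-DNF. Both arguments are equally elementary and lead to the same conclusion via Proposition~\ref{prop:expdnf}; the paper's route additionally yields an explicit length bound on the complement, while yours makes the link to the number of relevant features (and hence to depth-$k$ decision trees, cf.\ Section~\ref{sec:expressivity}) more transparent.
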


\begin{proof}
For the forward direction, if every classifier in ${\cal M}$ is $j$-AXp-interpretable then by Theorem~\ref{thm:terms} they are expressible as DNF formulas of length at most $k = j \cdot j^j$. Conversely, the complement of a DNF formula of length at most $k > 0$ is always expressible as a $k$-DNF of length at most $k^{k+1}$. Therefore, if every classifier in ${\cal M}$ is expressible as a DNF formula of length at most $k$ then ${\cal M}$ is interpretable.
\end{proof}

\section{Induced matchings and nested $k$-DNF}  \label{sec:induced}

In this section we describe a simple criterion for a classifier described by a $k$-DNF formula to be $k$-AXp-interpretable. This criterion is orthogonal to expressibility as a decision tree of depth $k$, and we will show in the subsequent section that it defines a remarkably expressive family of classifiers.

Let $D$ be a DNF formula that expresses a boolean function $\kappa$. A \emph{transversal} of $D$ is a subset of $L(D)$ that intersects every term in $D$. If we let ${\cal T}_{D}$ denote the set of all minimal transversals of $D$, then $\overline{\kappa}$ has the following canonical expression as a DNF:
\[
\overline{\kappa}(x) = \bigvee_{T \in {\cal T}_{D}} \bigwedge_{l \in T} \overline{l}
\]
Note that the canonical DNF expression of $\overline{\kappa}$ may include inconsistent terms. If $D$ does not contain two terms $t_1,t_2$ such that $t_1 \subset t_2$, then the canonical complement of the canonical complement of $D$ is $D$ itself\footnote{This is a well-known property of hypergraph dualisation, see e.g.~\cite[Chapter 2]{bergehypergraphs}.}. From this perspective, it is clear that the function expressed by a given $k$-DNF formula $D$ is $k$-AXp-interpretable if all minimal transversals of $D$ have cardinality at most $k$. 

Let $G_D=(V,E)$ be the bipartite graph with $V = L(D) \cup T(D)$ and $\{l,t\} \in E$ if and only if $l \in t$. An \emph{induced matching} of $G_D$ is a subset $M \subseteq E$ such that no two edges in $M$ share an endpoint and no edge in $E$ intersects two distinct edges in $M$. We denote by mim$(G_D)$ the maximum number of edges in an induced matching of $G_D$.

\begin{lemma}
\label{lem:mim}
Let $D$ be a $k$-DNF formula expressing a boolean function $\kappa$. If mim($G_D$) $\leq k$, then $\kappa$ is $k$-AXp-interpretable.
\end{lemma}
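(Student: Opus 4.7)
The plan is to invoke Proposition~\ref{prop:expdnf}: since $D$ itself already witnesses that $\kappa$ is expressible as a $k$-DNF, it suffices to show that $\overline{\kappa}$ is also expressible as a $k$-DNF. For this I would use the canonical transversal-based expression for $\overline{\kappa}$ displayed just above the lemma, namely $\overline{\kappa}(x) = \bigvee_{T \in {\cal T}_D} \bigwedge_{l \in T} \overline{l}$. The whole proof then reduces to showing that every minimal transversal $T \in {\cal T}_D$ satisfies $|T| \leq k$; once this is established, the displayed formula (after discarding the inconsistent terms, which evaluate to false and can be safely dropped) is a $k$-DNF representation of $\overline{\kappa}$, and Proposition~\ref{prop:expdnf} finishes the job.

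To bound $|T|$, the key idea is to turn minimality of $T$ into an induced matching in $G_D$ of cardinality $|T|$. First I would use minimality to assign, to each literal $l \in T$, a \emph{private} term $t_l \in T(D)$ such that $t_l \cap T = \{l\}$: if no such term existed, then $T \setminus \{l\}$ would still be a transversal, contradicting minimality. Then I would consider the set of edges $M = \{\{l, t_l\} : l \in T\}$ in $G_D$ and verify that $M$ is an induced matching. The literal endpoints are pairwise distinct by construction; the term endpoints $t_l, t_{l'}$ must differ because $l' \in t_{l'} \cap T$ but $t_l \cap T = \{l\}$ forces $l' \notin t_l$; and any additional edge $\{l', t_l\} \in E$ with $l' \in T \setminus \{l\}$ would put a second element of $T$ inside $t_l$, again contradicting its privateness. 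Hence $|M| = |T|$ and $|T| \leq \mathrm{mim}(G_D) \leq k$.

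The only subtle step is verifying the "induced" part of the induced matching, i.e., that no edge of $E$ joins endpoints of two distinct edges of $M$; but since $G_D$ is bipartite between literals and terms, any such chord would have to connect some $l \in T$ to some $t_{l'}$ with $l \neq l'$, which the privateness of $t_{l'}$ immediately rules out. I do not expect any serious obstacle beyond this verification; the proof is essentially a clean reformulation of the well-known duality between minimal transversals and private witnesses, expressed in the bipartite-graph language used to define $\mathrm{mim}(G_D)$.
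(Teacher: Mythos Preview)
Your proposal is correct and follows essentially the same route as the paper: reduce to bounding the size of every minimal transversal of $D$, and for a minimal transversal $T$ extract private witness terms $t_l$ with $t_l \cap T = \{l\}$ to build an induced matching of size $|T|$ in $G_D$. The paper presents this more tersely (it takes the reduction to minimal transversals as already established by the sentence preceding the lemma, and asserts without elaboration that the private-witness edges form an induced matching), whereas you spell out the invocation of Proposition~\ref{prop:expdnf} and the verification of the induced-matching property; but the mathematical content is the same.
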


\begin{proof}
We show that every minimal transversal of $D$ has cardinality at most $k$. Suppose for the sake of contradiction that $D$ has a minimal transversal $T$ of size $q > k$. By minimality, for every literal $l \in T$ there exists a term $t_l \in T(D)$ such that $t_l \cap T = \{l\}$. Then, the set of edges $\{ \{l, t_l\} \mid l \in T \}$ is an induced matching of $G_D$ of size $q > k$. This is not possible because mim($G_{D}$) $\leq k$.
\end{proof}

\begin{example}
\label{ex:mim}
Consider the majority function on $2k-1$ arguments defined by
$\kappa_{\mathrm{maj}}(x_1,\ldots,x_{2k-1}) \equiv (\sum_{i=1}^{2k-1} x_i \geq k)$. This function $\kappa_{\mathrm{maj}}$ is $k$-AXp-interpretable 
since it is the disjunction of all terms 
composed of exactly $k$ positive literals and its complement 
is the disjunction of all terms composed of exactly $k$ negative literals. 

The graphs associated with these formulas do not contain induced matchings of size larger than $k$, so $\kappa_{\mathrm{maj}}$ satisfies the criterion for $k$-AXp-interpretability given by Lemma~\ref{lem:mim}. However, it is well known that any decision tree representing $\kappa_{\mathrm{maj}}$ must have depth at least $2k-1$, as any path starting from the root that alternates between positive and negative literals cannot reach a leaf before all variables have been assigned.
\end{example}

The simple condition provided by Lemma~\ref{lem:mim} already defines a new family of $k$-AXp-interpretable classifiers, those expressible by $k$-DNF formulas with no induced matchings of size $k+1$. From a practical viewpoint, the interest of this family is limited because its definition is not constructive: without a clear structure, it is difficult to design efficient heuristics for learning formulas of this kind directly from data.

We address this issue by defining a smaller family of classifiers whose structure is more explicit. Consider $k^2$ literals $\ell_{i,j}$ ($1 \leq i,j \leq k$).
We can view $\{\ell_{i,j}\}$
as a $k \times k$ matrix:
\[
{\cal L} = \begin{pmatrix}
    \ell_{1,1} & \ell_{1,2} & \ldots & \ell_{1,k} \\
     & \vdots & & \\
    \ell_{k,1} & \ell_{k,2} & \ldots & \ell_{k,k} 
\end{pmatrix}
\]
We will define a $k$-DNF $D$ composed of $m$ terms 
(where $m$ is arbitrary) whose complement is also expressible
as a $k$-DNF. For each $p=1,\ldots,m$, let $r_{pi}$ ($i=1,\ldots,k$) be $k$ integers between 0
and $k$ such that $\sum_{i=1}^{k} r_{pi} \leq k$. Then define $D$ as follows:
\[ D \ =\ \bigvee_{p=1}^{m} \ \bigwedge_{i=1}^{k} \bigwedge_{j=1}^{r_{pi}} \ell_{i,j}
\]
The condition that $\sum_{i=1}^{k} r_{pi} \leq k$ 
for each $p=1,\ldots,m$ ensures that $D$
is a $k$-DNF. We call such a DNF a \emph{nested} $k$-DNF.
The term 
\[\bigwedge_{i=1}^{k} \bigwedge_{j=1}^{r_{pi}} \ell_{i,j}
\]
of $D$ is the conjunction of, for each $i=1,\ldots,k$, the $r_{pi}$ leftmost
elements in row $i$ of the matrix $\cal L$.

\begin{proposition}
\label{prop:nest}
Every boolean function expressible as a nested $k$-DNF formula is $k$-AXp-interpretable.
\end{proposition}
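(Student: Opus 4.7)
The plan is to reduce the claim to Lemma~\ref{lem:mim}: I just need to show that, for any nested $k$-DNF $D$ built from a matrix $\mathcal{L}$ of literals and parameters $r_{pi}$ as in the statement, the literal-term incidence graph $G_D$ satisfies $\mathrm{mim}(G_D) \leq k$. Then Lemma~\ref{lem:mim} immediately delivers $k$-AXp-interpretability.

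First I would unpack what an induced matching of size $q$ in $G_D$ encodes: $q$ distinct literals $l_1,\ldots,l_q$ together with $q$ distinct terms $t_1,\ldots,t_q$ of $D$ such that $l_s \in t_s$ for every $s$ and $l_s \notin t_{s'}$ whenever $s \neq s'$ (otherwise the edge $\{l_s, t_{s'}\}$ of $G_D$ would share an endpoint with each of $\{l_s, t_s\}$ and $\{l_{s'}, t_{s'}\}$, violating the induced-matching condition). Since $l_s$ appears in $t_s$, I can fix a matrix position $(i_s, j_s) \in \{1,\ldots,k\}^2$ with $\ell_{i_s, j_s} = l_s$ and $j_s \leq r_{s, i_s}$.

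The heart of the argument is a pigeonhole on the rows of $\mathcal{L}$, combined with the defining feature of a nested $k$-DNF: every term contains a \emph{prefix} of every row. Assume for contradiction that $q \geq k+1$. Then two of the row indices coincide, say $i_s = i_{s'} = i$. The literals $l_s, l_{s'}$ are distinct, so $j_s \neq j_{s'}$; without loss of generality $j_s < j_{s'}$. By construction $j_{s'} \leq r_{s', i}$, so $r_{s', i} > j_s$, which means the row-$i$ prefix belonging to $t_{s'}$ already includes $\ell_{i, j_s} = l_s$. This contradicts $l_s \notin t_{s'}$, proving $\mathrm{mim}(G_D) \leq k$.

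The one subtlety I anticipate is that the $k^2$ entries of $\mathcal{L}$ are not required to be pairwise distinct, so a given literal may have several realizations as some $\ell_{i,j}$. This is dealt with cleanly by committing, for each matched pair $(l_s, t_s)$, to a realization $(i_s, j_s)$ satisfying $j_s \leq r_{s, i_s}$ — the membership $l_s \in t_s$ guarantees that such a realization exists — after which the row-pigeonhole step above goes through unchanged.
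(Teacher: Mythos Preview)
Your proof is correct and follows essentially the same approach as the paper: reduce to Lemma~\ref{lem:mim} and bound $\mathrm{mim}(G_D)$ by pigeonholing the matched literals into rows of $\mathcal{L}$, then observing that the term with the longer row-prefix contains both literals. Your treatment is in fact a bit more careful than the paper's, which tacitly assumes each matched literal can be assigned a row of $\mathcal{L}$ compatible with its partner term; your explicit choice of a realization $(i_s,j_s)$ with $j_s \le r_{s,i_s}$ handles the case of repeated matrix entries cleanly.
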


\begin{proof}
Let $D =\ \bigvee_{p=1}^{m} \ \bigwedge_{i=1}^{k} \bigwedge_{j=1}^{r_{pi}} \ell_{i,j}$ be a nested $k$-DNF formula. Towards a contradiction, suppose that there exists an induced matching $M$ of size $k+1$ in $G_D$. By the pigeonhole principle, at least two literals that appear in $M$ belong to the same row $i_M$ of ${\cal L}$. Two terms are matched with these two literals, and the term with the largest value for $r_{pi_M}$ must contain both. This is impossible because $M$ is an induced matching. Applying Lemma~\ref{lem:mim}, the function expressed by $D$ is therefore $k$-AXp-interpretable.
\end{proof}

\begin{example}
\label{ex:nest}
Observe that all $k$-DNF formulas with $q$ terms are nested if $q \leq k$. Indeed, for any such formula $D$ we can set ${\cal L}$ to be a $k \times k$ matrix of literals whose $i$th row contains the literals of the $i$th term of $D$ (possibly with repetition if the term has fewer than $k$ literals). Then, for $p \leq q$ we set $r_{pi} = k$ if $p=i$, and $r_{pi} = 0$ otherwise. These parameters will produce exactly the formula $D$. In general, such formulas are not expressible as decision trees of depth smaller than $k^2$~\cite{DurdymyradovM24}.

On the other hand, the function $\kappa_{\mathrm{maj}}$ of Example~\ref{ex:mim} is not expressible as a nested $k$-DNF. We proceed again by contradiction. If $\kappa_{\mathrm{maj}}$ could be represented by a nested $k$-DNF generated from a $k \times k$ matrix ${\cal L}$, then ${\cal L}$ would contain only positive literals. Let $L_1$ be set of the literals in the first column of ${\cal L}$,
and $J$ the other positive literals. 
All terms generated from ${\cal L}$ must 
contain at least one literal from $L_1$. If $|J| \geq k$,
then there is a term consisting of $k$ positive literals not
occurring in the first column, and which therefore could not be generated.
Hence, we must have $|L_1|=k$ and $|J|=k-1$. Without loss of generality,
assume $L_1=\{x_1,\ldots,x_k\}$ and $J=\{x_{k+1},\ldots,x_{2k-1}\}$.
For each $i=1,\ldots,k$, the term $x_i \bigwedge_{x_j \in J}x_j$  
must be generated from ${\cal L}$ by taking
literals from a single row, since it contains a single literal from 
the first column of ${\cal L}$. It follows that the columns $2,3,\ldots,k$
of ${\cal L}$ contain only elements of $J$. Since $|J|=k-1$,
the second column of ${\cal L}$ must contain at least one repeated
element. Without loss of generality, assume that this repeated element is $x_{k+1}$
and that it occurs in the two rows whose first elements are $x_1$ and $x_2$.
But then, for $k \geq 3$, it is impossible to generate the term
$x_1 x_2 x_{k+2} \ldots x_{2k-1}$, since all terms containing $x_1$
and $x_2$ must also contain $x_{k+1}$. 
\end{example}

\section{Expressivity of nested $k$-DNFs}
\label{sec:expressivity}

In machine learning, it is important that the language of models
$\cal M$ used in the learning phase be sufficiently rich
to capture all functions we might wish to learn. Consider a classifier $\kappa$ that is a function of only $k$
variables $x_1,\ldots,x_k$. Both $\kappa$ and its complement $\overline{\kappa}$
can be expressed as $k$-DNFs. This is because $\kappa$ (respectively, 
$\overline{\kappa}$) is the disjunction of the terms corresponding to
the assignments to the variables $x_1,\ldots,x_k$ for which
$\kappa(x_1,\ldots,x_k)=1$ (respectively, $\overline{\kappa}(x_1,\ldots,x_k)=1$). 
All functions of $k$ variables can be expressed as depth-$k$
decision trees (with $x_i$ associated with all decision nodes
at depth $i-1$), so an obvious question is whether the same is true
for nested $k$-DNFs. We answer this question positively in the
following proposition.

\begin{proposition}
\label{prop:functions}
Every boolean function $\kappa$ of $k$ boolean variables can be
expressed as a nested $k$-DNF.
\end{proposition}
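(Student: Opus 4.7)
I would prove the proposition by induction on $k$. The base case $k=1$ is immediate, since each of the four boolean functions of a single variable ($0$, $1$, $x_1$, $\overline{x_1}$) is trivially expressible as a nested $1$-DNF via a $1 \times 1$ matrix and an $r$-vector of length $0$ or $1$.

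For the inductive step, I would first dispose of the easy sub-case in which $\kappa$ does not genuinely depend on some variable $x_v$: then $\kappa$ is effectively a function of $k-1$ variables, and the inductive hypothesis furnishes a $(k-1) \times (k-1)$ nested-DNF matrix, which I would extend to $k \times k$ by padding with a dummy row and column whose literals are never selected by the chosen $r$-vectors. Otherwise, I would apply Shannon expansion on some variable $x_k$: writing $\kappa = (\overline{x_k} \wedge \kappa_0) \vee (x_k \wedge \kappa_1)$ with $\kappa_0, \kappa_1 : \{0,1\}^{k-1} \to \{0,1\}$ functions of the remaining $k-1$ variables, the inductive hypothesis supplies nested $(k-1)$-DNFs with matrices $\mathcal{L}_0$ and $\mathcal{L}_1$.

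The hard part will be combining $\mathcal{L}_0$ and $\mathcal{L}_1$ into a single $k \times k$ matrix for $\kappa$, since a naive side-by-side arrangement together with rows supplying the literals $x_k$ and $\overline{x_k}$ would require roughly $2k$ rows. I would address this either by strengthening the inductive hypothesis to enforce a canonical structure on $\mathcal{L}_0$ and $\mathcal{L}_1$ that permits row sharing, or by foregoing Shannon expansion in favor of a direct construction. A concrete construction of the latter type, suggested by the cyclic matrix that handles the parity function of $k$ variables, is to set $\mathcal{L}[i][1]$ to $x_i$ or $\overline{x_i}$ with signs chosen based on the satisfying assignments of $\kappa$, and to fill the remaining positions of row $i$ with a rotation of the complementary literals of the other variables. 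For each satisfying assignment $a$, the term $t_a$ would then be recovered by an $r$-vector that picks the prefixes of the rows matching $a$ (those $i$ whose first-column sign coincides with $a_i$) and partitions the remaining literals of $t_a$ among these rows by a cyclic arc argument that exactly covers the unmatched positions. Showing that such a matrix can always be found, for an arbitrary $\kappa$, is the technical crux of the proof.
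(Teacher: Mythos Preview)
Your proposal sketches two approaches but completes neither. The inductive route via Shannon expansion you yourself flag as problematic: merging two $(k-1)\times(k-1)$ matrices into one $k\times k$ matrix is not obviously possible, and you offer no concrete strengthened hypothesis that would make the merge go through. The alternative direct cyclic construction is much closer to what actually works, but you leave its crux open (``Showing that such a matrix can always be found \ldots\ is the technical crux of the proof''), and your description contains a real ambiguity: you propose to set ${\cal L}[i][1]$ to $x_i$ or $\overline{x_i}$ ``with signs chosen based on the satisfying assignments of $\kappa$'', yet different satisfying assignments have different sign patterns, so a single column of signs cannot match them all.

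The paper dissolves this difficulty with one normalization step you are missing. If $\kappa$ is not identically $1$, then after complementing variables one may assume $\kappa(0,\ldots,0)=0$. Under this assumption \emph{every} satisfying assignment contains at least one positive literal, so one can take the first column of ${\cal L}$ to be uniformly positive, namely $x_1,\ldots,x_k$, and fill row $i$ with the cyclic sequence $x_i,\overline{x_{i+1}},\overline{x_{i+2}},\ldots,\overline{x_{i-1}}$. Given a satisfying assignment with positive literals at indices $i_1<\cdots<i_h$, the $r$-vector that takes $r_{i_j}=i_{j+1}-i_j$ (cyclically, with $r_{i_h}=k+i_1-i_h$) and $r_i=0$ elsewhere selects exactly the corresponding minterm, and $\sum r_i=k$. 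Once this normalization is in place the ``cyclic arc argument'' you allude to becomes a two-line verification; without it, the construction is not well defined.
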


\begin{proof}
If $\kappa$ is the constant function 1, then it can
be trivially expressed as a nested $k$-DNF that contains a single term with zero literals. We can therefore assume that $\kappa$ is equal
to 0 for some assignment to the $k$ variables $x_1,\ldots,x_k$.
Without loss of generality, we assume that $\kappa(0,\ldots,0)=0$.
Let the $k \times k$ matrix of literals $\{\ell_{ij}\}$ be
\[
{\cal L} = \begin{pmatrix}
    x_1 & \overline{x_2} & \overline{x_3} & \ldots & \overline{x_k} \\
    x_2 & \overline{x_3} & \overline{x_4} & \ldots & \overline{x_1} \\
    x_3 & \overline{x_4} & \overline{x_5} & \ldots & \overline{x_2} \\
     &  & \vdots & & \\
    x_k & \overline{x_1} & \overline{x_2} & \ldots & \overline{x_{k-1}} 
\end{pmatrix}
\]
The classifier $\kappa$ can be expressed as the disjunction of terms
corresponding to assignments for which $\kappa$ is equal to 1. 
Any such term $t$ contains $h$ positive literals, where $h \geq 1$
(since a DNF satisfying $\kappa(0,\ldots,0)=0$ cannot
contain the term $\overline{x_1} \cdots \overline{x_k}$). 
Let $x_{i_j}$ ($j=1,\ldots,h$) be these positive literals.
Let $r_{i_j}=i_{j+1}-i_j$ ($j=1,\ldots,h-1$), $r_{i_h}=k+i_1-i_h$
and $r_i=0$ for all $i \notin \{i_1,\ldots,i_h\}$.
Then $t$ is the conjunction of the leftmost $r_{i_j}$ literals in row $i$
(for $i=1,\ldots,k$) of the above matrix $\cal L$. Since each term $t$
of $\kappa$ can be constructed in this way, $\kappa$ is a
nested $k$-DNF.
\end{proof}


A consequence of Proposition~\ref{prop:functions} is that nested $k$-DNF formulas can always be constructed to fit any consistent dataset provided that $k$ is large enough. In particular, the least integer $k$ such that a boolean function or its complement can be represented as a nested $k$-DNF formula is a well-defined measure that cannot exceed the number of variables (as is the case for decision trees of depth $k$).

One criterion for comparing families of models ${\cal M}$ is to estimate
the number of distinct functions that can be represented by ${\cal M}$.
Let $N_{DT}(k,n)$ and $N_{\mathit{nested}}(k,n)$ be, respectively, the number
of functions representable by a depth-$k$ decision tree or by a nested $k$-DNF,
where $n$ is the total number of variables. Recall that nested $k$-DNF formulas can be function of at most $k^2$ variables, whereas decision trees of depth $k$ may depend on (up to) $2^k-1$ variables. For this reason, it is expected that if $n$ is large enough compared to $k$ then $N_{\mathit{nested}}(k,n)$ will necessarily be smaller than $N_{DT}(k,n)$. We show that the opposite is true when $n$ is not much larger than $k$. Informally, nested $k$-DNF formulas involve fewer features than decision trees of depth $k$ but can express a greater variety of dependencies between those features.

\begin{proposition}
If $k \geq 4$ and $k^2 \leq n \leq 2^{2^{k-1}/k-1}$, then $N_{\mathit{nested}}(k,n) > N_{DT}(k,n)$.
\end{proposition}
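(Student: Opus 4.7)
The plan is to bound $N_{DT}(k,n)$ from above and $N_{\mathit{nested}}(k,n)$ from below with explicit counts, and then check that the hypothesis $n \leq 2^{2^{k-1}/k - 1}$ is just tight enough to force $N_{DT}(k,n) < N_{\mathit{nested}}(k,n)$. For decision trees, I would use the coarse estimate $N_{DT}(k,n) \leq n^{2^k-1}\cdot 2^{2^k}$, obtained by observing that every depth-$\leq k$ decision tree function can be realised by a full binary tree of depth exactly $k$, with $2^k - 1$ internal nodes (each labelled by one of $n$ variables) and $2^k$ leaves (each labelled in $\{0,1\}$). Taking logarithms and substituting $\log_2 n \leq 2^{k-1}/k - 1$, the two contributions of order $2^k$ cancel and one obtains $\log_2 N_{DT}(k,n) \leq 2^{2k-1}/k - 2^{k-1}/k + 1 < 2^{2k-1}/k$.

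For the lower bound on $N_{\mathit{nested}}(k,n)$, since $n \geq k^2$ I can fill the matrix ${\cal L}$ from Section~\ref{sec:induced} with $k^2$ distinct positive variables. I then restrict attention to the \emph{middle-layer} patterns $(r_1,\ldots,r_k)$ satisfying $\sum_i r_i = k$ exactly; there are $\binom{2k-1}{k-1}$ of these. The key geometric fact is that, because the entries of ${\cal L}$ are $k^2$ distinct positive variables, any two middle-layer patterns produce terms that are incomparable under set inclusion (an inclusion would strictly decrease the pattern sum). Consequently, for any subset $A$ of middle-layer patterns, the terms produced by $A$ are exactly the prime implicants of the monotone function represented by the corresponding nested $k$-DNF, and distinct subsets $A$ yield distinct functions. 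This gives $N_{\mathit{nested}}(k,n) \geq 2^{\binom{2k-1}{k-1}}$.

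Putting the two bounds together, it suffices to prove $\binom{2k-1}{k-1} > 2^{2k-1}/k$, or equivalently $\binom{2k}{k} > 2^{2k}/k$, for all $k \geq 4$. I would verify the base case $k = 4$ by hand ($\binom{8}{4} = 70 > 64 = 2^8/4$) and induct upward using the identity $\binom{2k+2}{k+1} = \binom{2k}{k}\cdot 2(2k+1)/(k+1)$, which reduces the inductive step to the trivial inequality $(2k+1)/k > 2$. The real obstacle, I expect, is choosing the right family of nested $k$-DNFs in the lower bound: a family naturally parameterised by subsets of $\mathrm{poly}(k)$ patterns would deliver only $2^{\mathrm{poly}(k)}$ functions, which is not enough to dominate $N_{DT}(k,n)$. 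It is crucial to exploit the whole middle layer, whose size is of order $4^k/\sqrt{k}$, in order to reach the doubly-exponential count $2^{\binom{2k-1}{k-1}}$ that matches the decision-tree bound; the condition $k \geq 4$ in the statement is exactly what is needed for this binomial inequality to kick in.
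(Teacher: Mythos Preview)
Your proof is correct and follows essentially the same approach as the paper: the same crude upper bound $N_{DT}(k,n)\le n^{2^k-1}2^{2^k}$, the same lower bound via subsets of the $\binom{2k-1}{k-1}$ middle-layer terms over a matrix of $k^2$ distinct positive literals, and the same reduction to $\binom{2k}{k}>2^{2k}/k$ for $k\ge 4$. Your write-up is in fact slightly more careful than the paper's in two places: you explicitly justify that distinct subsets of middle-layer terms yield distinct functions (via the antichain/prime-implicant argument), and you prove the binomial inequality by a clean induction rather than invoking a Stirling-type estimate.
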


\begin{proof} 
Every function representable by a decision tree of depth $k$ can be represented by a complete tree with $2^k$ leaves. Each of the $2^k-1$ internal nodes is associated with a variable and each of the $2^k$ leaves is associated with a class. There are $n^{2^k\!-\!1} 2^{2^k}$ such decision trees, so $N_{DT}(k,n) \leq n^{2^k\!-\!1} 2^{2^k}$.

We consider a fixed matrix 
${\cal L}$ composed of $k^2$ distinct positive literals $\ell_{i,j}$. 
By the stars and bars theorem, the number of distinct terms of the form 
$\bigwedge_{i=1}^k \bigwedge_{j=1}^{r_i} \ell_{i,j}$
where $\sum_{i=1}^k r_i = k$
is exactly $C^{2k-1}_{k-1} = 1/2 \cdot C^{2k}_{k}$. Using the inequality $C^{2k}_{k} \geq 2^{2k}/\sqrt{\pi (k + 1/3)}$, we deduce that $N_{\mathit{nested}}(k,n)$
is bounded below by $2^{2^{2k-1}/k}$ for $k \geq 4$, since each of the $1/2 \cdot C^{2k}_{k}$ terms may or may not occur in 
the nested $k$-DNF formula. It follows that $N_{\mathit{nested}}(k,n) > N_{DT}(k,n)$
if $n \leq 2^{2^{k-1}/k-1}$.
\end{proof}

Figure~\ref{fig:landscape} provides a summary of the relationship between the major classes of $k$-AXp-interpretable classifiers.

\setlength{\unitlength}{1cm}
\begin{figure}
\centering
\begin{picture}(7.6,3.8)(-1.8,0.7)
    \put(1.1, 2){\oval(5.2,2)}
    \put(2, 2.1){\oval(2.9,1.35)}
    \put(2.9, 2.2){\oval(5.2,2)}
    \put(2.9, 2.5){\oval(5.6,3.5)}
    \put(-0.6,2){\makebox(0,0){\shortstack{depth-$k$ \\ DTs}}}
    \put(2,2.1){\makebox(0,0){\shortstack{functions of \\ $k$ variables}}}
    \put(4.6,2.2){\makebox(0,0){\shortstack{$\kappa$ or $\overline{\kappa}$ is \\ a nested \\ $k$-DNF}}}
    \put(2.9,3.7){\makebox(0,0){\shortstack{$\kappa$ or $\overline{\kappa}$ is a $k$-DNFs with \\induced  matchings of size $\leq k$}}}
\end{picture}
\caption{The landscape of $k$-AXp-interpretable classifiers $\kappa$}
\label{fig:landscape}
\end{figure}
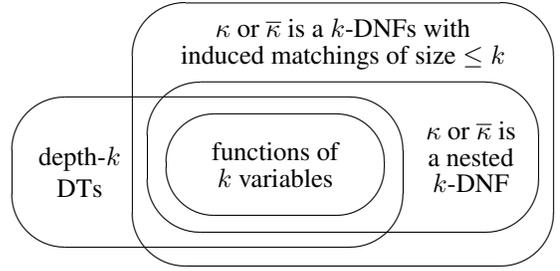

\section{Experiments}
\label{sec:expes}

In this section, we present a heuristic algorithm for finding nested $k$-DNFs, distinguished by its intuitive and straightforward design\footnote{The code is available in this GitHub \href{https://github.com/1-IM/I-DNFs}{repository}}. It is worth noting that alternative algorithms could also be considered. Next, we provide an experimental comparison with the depth-$k$ decision trees obtained by CART~\cite{DBLP:books/wa/BreimanFOS84}.

\subsection{Heuristic algorithm}

The heuristic consists of three steps: constructing the matrix, constructing the nested $k$-DNF, and a pruning phase. In Algorithm~\ref{alg:algorithm}, we show how to construct the $k \times k$ matrix $\cal L$ by proceeding row by row, where $k$ is less than or equal to the total number of features $n$. The idea is to create a matrix that will allow us, in the next step, to generate a large number of distinct and consistent terms. To achieve this, the literal $\ell_{i,j}$ ($0 \leq i,j \leq k-1$) is selected such that the $j+1$ leftmost elements in row $i$ of the matrix $\cal L$ are highly representative of class 1 while being minimally representative of class 0. A key condition is that $\ell_{i,j}$ must differ from the $j$ preceding literals in row $i$ and their negations (to avoid redundancy or inconsistency). Additionally, to encourage diversity between different rows, we exclude all literals in the first $limit=k-j$ columns (of the already-chosen rows) from the list of candidate literals for $\ell_{i,j}$, provided that at least one literal remains available for selection. The value of $limit$ is reduced accordingly if the number $2(n-j)$ of available literals is less than or equal to the number $i \times (k-j)$ of literals we would like to forbid.

Secondly, we construct the nested $k$-DNF by evaluating one term at a time, starting with terms of size $k$ and decreasing down to size 1. A term is considered for evaluation if it is consistent (i.e. it does not contain both a literal and its negation). We decide to select a term if $P \neq 0 $ and $Q < P$, where $P$ (respectively, $Q$) represents the number of examples in class 1 (respectively, class 0) that satisfy this term and are not already covered by the selected terms. Furthermore, a term is also chosen if it covers at least one example from class 1 and does not cover any example from class 0 (irrespective of whether examples have already been covered). The process stops when either all examples in class 1 are covered or there are no more terms to evaluate. Finally, we perform pruning, where we determine whether to retain each term. The same evaluation as before is applied using P and Q (i.e. we remove a term if $P = 0$ or $Q \geq P$). This time, we compare each term against all other terms, not just the previously selected terms.

\begin{algorithm}[tb]
    \caption{Construct\_matrix}
    \label{alg:algorithm}
    \textbf{Input}: k, dataset\\ 
    \textbf{Output}: matrix ${\cal L}$ 
    \begin{algorithmic}[1] 
        \FOR{ $i = 0 \,\, to \,\, k-1$}
        \FOR{$j = 0 \,\, to \,\, k-1$}
        \IF {$ i = 0 $}
        \STATE $limit = 0$
        \ELSE
        \STATE $limit = \min(k-j, \lceil (2(n-j)/i)-1 \rceil)$
        \ENDIF \\
        \textbackslash\textbackslash \, $Ec_{1} (t)$: nb. examples in class 1 that satisfy $t$ \\
        \textbackslash\textbackslash \, $Ec_{0} (t)$: nb. examples in class 0 that satisfy $t$ \\
        \STATE Calculate $ G = Ec_{1}(\ell_{i,0} ... \ell_{i,j}) \, – \, Ec_{0}(\ell_{i,0} ... \ell_{i,j}) $ for each literal not in ${\cal L}_{i,0:j} \cup \overline{{\cal L}_{i,0:j}} \cup {\cal L}_{0:i,0:limit}$ \\
	    \STATE Take as $\ell_{i,j}$ the literal that gives the greatest G
        \ENDFOR
        \ENDFOR
        \STATE \textbf{return} matrix ${\cal L}$ 
    \end{algorithmic}
\end{algorithm}

\subsection{Datasets}

A collection of datasets from the \href{https://archive.ics.uci.edu/}{UCI repository} and \href{https://www.kaggle.com/}{Kaggle} are considered, which have been used to evaluate a wide range of learning algorithms. These datasets contain various feature types, which are converted into boolean features for binary classification as in~\cite{blossom}. We employ the datasets in their original form, without any preprocessing techniques applied. Table~\ref{tab:datasets} shows, for each dataset, the number of data examples and the number of boolean features. 

\begin{table} 
    \centering
    \begin{tabular}{lrr}
        \toprule
        Dataset  & Size & Nb. boolean features \\
        \midrule
        Balance-scale  &  625 & 16   \\
        Banknote  & 1372 & 28    \\
        Car-evaluation  & 1728 & 14    \\
        Compas discretized  & 6167 & 25    \\
        Indians Diabetes  & 768 & 43   \\
        Iris  & 150 & 12   \\
        Lymph  & 148 & 68   \\
        Monks-1  & 124 & 11  \\
        Monks-2  & 169 & 11  \\
        Monks-3  & 122 & 11  \\
        Tic-tac-toe  & 958 & 27  \\
        \bottomrule
    \end{tabular}
    \caption{Description of the datasets used in the experiments.}
    \label{tab:datasets}
\end{table}

\begin{table*} [!t]
    \centering
    \begin{tabular}{l|rrr|rrr|rrr}
        \toprule
         & \multicolumn{9}{c}{Test accuracy (\%)} \\
        \cmidrule{2-10}
        Dataset & \multicolumn{3}{c}{$k=2$} & \multicolumn{3}{c}{$k=3$} & \multicolumn{3}{c}{$k=4$} \\
        \cmidrule{2-10}
        &   \hspace{0.35cm} DT   &  \hspace{0.35cm} DNF &  \hspace{0.35cm}  \(\overline{\text{DNF}}\) &   \hspace{0.35cm} DT   & \hspace{0.35cm} DNF & \hspace{0.35cm}  \(\overline{\text{DNF}}\)&  \hspace{0.35cm} DT  & \hspace{0.35cm}  DNF  &\hspace{0.35cm}  \(\overline{\text{DNF}}\)\\ 
        \midrule
        Balance-scale & \textbf{93.28} & 89.04 & \textbf{93.28} &
        \textbf{93.28} & 92.46 & \textbf{93.28} & 
        \textbf{93.28} & 92.10 & \textbf{93.28} \\
        
         Banknote    & 86.40 & \textbf{88.95} & 83.35 &
         \textbf{89.45} & 88.95 & 83.35 &
         \textbf{95.49} & 90.53 & 86.24 \\
         
         Car-evaluation   &  \textbf{85.78} & 77.80 & 73.35 & 
         86.65 & 84.51 & \textbf{89.13} &
         91.68 & 83.15 & \textbf{92.14}\\
         
         Compas discretized   &   64.47 & 64.02 & \textbf{65.71} &
         65.90 & 65.87 & \textbf{67.18} &
         66.40 & 66.07 & \textbf{67.12}  \\
         
         Indians Diabetes   & 77.01 &  \textbf{78.70} & 76.16 &
         78.18 & \textbf{79.48} & 77.48 &
         77.42 & \textbf{79.56} & 77.52 \\
         
         Iris   &  98.00 & 96.00 &  \textbf{99.33} &
         \textbf{98.00} &  97.53 & \textbf{98.00} &
         98.00 & \textbf{98.60} &  98.00 \\
         
         Lymph   &   81.33 &  76.73 & \textbf{85.33} &
        79.93 & 79.67 &  \textbf{87.13} &
        85.13 & 82.07 &  \textbf{86.07}\\
        
         Monks-1   & \textbf{75.00}     & \textbf{75.00} &  66.67  &
         \textbf{83.33} & 77.78 &  66.67 &
         \textbf{83.33} & 78.50 &  75.22 \\
         
         Monks-2   & 56.94    & \textbf{60.65} & 60.26 &
         \textbf{63.89} & 63.66 & 61.13 & 
         61.31 & \textbf{65.15} & 63.49  \\
         
         Monks-3   & \textbf{97.22}     & \textbf{97.22} &  \textbf{97.22} &
         94.44 & \textbf{97.22}  & \textbf{97.22} &
         95.37 & \textbf{97.22} & 94.59 \\
         
         Tic-tac-toe  & 68.23 & \textbf{68.76} & 68.31 & 
         72.40 & 70.05 & \textbf{75.65} & 
         \textbf{81.77} & 75.27 & 80.16 \\
        \bottomrule
    \end{tabular}
    \begin{tabular}{l|rrr|rrr}
       \addlinespace[1.5em]
       \toprule
         & \multicolumn{6}{c}{Test accuracy (\%)} \\
         \cmidrule{2-7}
        Dataset & \multicolumn{3}{c}{$k=5$} & \multicolumn{3}{c}{$k=6$}  \\
        \cmidrule{2-7}
        &   \hspace{0.35cm}DT   &  \hspace{0.35cm} DNF &  \hspace{0.35cm}  \(\overline{\text{DNF}}\) &   \hspace{0.35cm} DT   & \hspace{0.35cm} DNF & \hspace{0.35cm}  \(\overline{\text{DNF}}\)\\ 
        \midrule

         Balance-scale &  92.96 & 92.05 & \textbf{93.28} &
         92.18 & 90.58 & \textbf{93.10} \\
        
         Banknote    &  \textbf{98.25} & 90.25 & 88.52
         & \textbf{99.02} & 90.01 & 88.52 \\
         
         Car-evaluation   &    \textbf{92.83} & 82.51 & 91.48 & 
         \textbf{93.64} & 82.97 & 91.79 \\
         
         Compas discretized   &  \textbf{67.31} &  66.40 & \textbf{67.31} &
         66.97 & 66.51 & \textbf{67.70}  \\
         
         Indians Diabetes   &  77.64  & \textbf{79.66} &  77.23 &
         77.43 &  \textbf{79.57} & 76.97 \\
         
         Iris   &    \textbf{98.00} & \textbf{98.00} & \textbf{98.00} &
         \textbf{ 98.00 } & 97.27 & \textbf{98.00} \\
         
         Lymph   &  85.00 & 81.93 & \textbf{85.93} &
         84.27 & 80.40 & \textbf{86.27}\\
         
         Monks-1   & \textbf{83.33} & 82.20 & 77.41 & 
         83,33 &  \textbf{91.17}  & 80.52\\
         
         Monks-2   & 68.26 & 67.32 &  \textbf{68.33} &
         \textbf{78.85} & 67.55 & 73.63 \\
         
         Monks-3   & 89.81 & 89.00 &  \textbf{92.46} &
         \textbf{92.59} & 87.09 &  88.19 \\
         
         Tic-tac-toe  &  \textbf{90.98} & 75.52 & 78.07 &
         \textbf{92.28} & 77.55 & 79.38 \\
         
        \bottomrule
    \end{tabular}
    \caption{Test accuracy of depth-k decision trees and nested k-DNFs}
     \label{tab:acc}
\end{table*}

\subsection{Results}

As a first test, the proposed heuristic successfully found the 2-DNF with 2 terms that perfectly match the full truth-table
generated from $\kappa(a,b,c,d) = (a \land b) \lor (c \land d)$. In contrast, the CART algorithm required a depth of 4 to create a decision tree that fits the data exactly, as mentioned in Example~\ref{ex:2dnf}.

The rest of our experimental assessment was performed on the datasets described above. For a given dataset, 80\% of the dataset was used for training and 20\% for testing, except for the Monks datasets, where the test set is provided separately and consists of 432 examples, consisting of all possible combinations of the feature-values. The average performance across five split experiments is reported. For each of the two training algorithms, the experiment is run 10 times and the average accuracy is computed on the test set. Table~\ref{tab:acc} shows the accuracy of our nested $k$-DNFs (column DNF) and the decision trees generated by CART with a fixed maximum depth of $k$ (column DT).
Given the asymmetry of nested $k$-DNFs with respect to complementation, we repeated the
experiment, learning a nested $k$-DNF model for $\overline{\kappa}$ 
rather than $\kappa$: results are
reported in column $\overline{\text{DNF}}$.

The aim in using different datasets for experimentation is to assess whether the proposed heuristic can actually find a nested $k$-DNF that accurately represents the underlying structure of the data, as decision trees do. The results indicate variability in accuracy across different datasets, with nested k-DNFs outperforming depth-$k$ decision trees in some cases, and vice versa in others. Overall, the results achieved by both depth-k decision trees and nested $k$-DNFs are comparable. Thus, nested $k$-DNFs emerge as a promising alternative to decision trees, with these initial results highlighting the potential of this family of models.

We also compared the size of the DT and nested $k$-DNF models. Table~\ref{tab:SIZE} shows the average number of leaves in the DTs and the average number of terms in the DNFs across five datasets splits, with both training algorithms executed 10 times per split, and the best-performing model selected from these iterations. A nested $k$-DNF is composed of terms associated with a single class, while a DT contains paths leading to both classes. However, despite this difference, the number of terms is generally less than half of the number of leaves, with a significant number of cases exhibiting an even greater disparity. A similar observation was noted for the number of terms in $\overline{\text{DNF}}$. This suggests that the nested $k$-DNFs are simpler than the DTs in terms of size.

\begin{table*} [!t]
    \centering
    \begin{tabular}{l|rr|rr|rr|rr|rr}
        \toprule
         Dataset & \multicolumn{2}{c}{$k=2$} & \multicolumn{2}{c}{$k=3$} & \multicolumn{2}{c}{$k=4$} & \multicolumn{2}{c}{$k=5$} & \multicolumn{2}{c}{$k=6$}\\
        \cmidrule{2-11}
        &   \hspace{0.35cm}DT   &  \hspace{0.35cm} DNF &   \hspace{0.35cm} DT   & \hspace{0.35cm} DNF &   \hspace{0.35cm} DT  & \hspace{0.35cm}  DNF &   \hspace{0.35cm} DT   & \hspace{0.35cm} DNF  &  \hspace{0.35cm} DT   & \hspace{0.35cm}  DNF \\ 
        \midrule
         Balance-scale & 4.0 &2.0 &8.0 &2.8  &14.6 &3.4 &23.2&7.2  &36.4&13.6  \\
         Banknote    & 4.0 &2.0 &8.0 &2.0  &14.0 &2.0 &21.2&2.2  &27.2&2.6  \\
         Car-evaluation   &3.0 &1.0 &4.0&2.6  &6.0&3.2&   9.8&3.8&16.4&4.4 \\
         Compas discretized   &4.0 &1.8& 8.0&2.6&  15.6&4.2&  29.0&4.6  &51.4&6.4   \\
         Indians Diabetes   & 4.0 &2.0 &8.0&3.2& 15.6&4.2   &27.4&5.0 &43.4&5.8 \\
         Iris   & 3.0&1.6   &4.4&2.0   &4.4&2.0  &4.4&2.6  &4.4&3.2  \\
         Lymph   &  4.0&1.8  &8.0&2.2   &13.2&2.4&  16.2&2.2&18.0&3.0\\
         Monks-1   & 3.0&2.0   & 5.0&3.0&   6.0&3.0 &8.0&5.0  &11.0&6.0 \\
         Monks-2   & 4.0&2.0   &8.0&4.0   &15.0&5.6&   25.0&6.6&   40.0&8.6\\
         Monks-3   &  4.0 &1.0   &6.0 &1.0  &9.0 &1.0  &11.0 &3.6  &13.0 &9.0\\
         Tic-tac-toe  & 4.0&1.2  &8.0&4.2&  14.0&8.4&  22.4&10.6&  33.8&15.0 \\
        \bottomrule
    \end{tabular}
    \caption{The number of leaves in the DT and the number of terms in the nested $k$-DNF}
    \label{tab:SIZE}
\end{table*}

\section{Conclusion and future work}

A machine-learning model can be deemed interpretable if each
of its decisions has an explanation that is intelligible by a human user.
We formalized this definition of interpretability based on
abductive or counterfactual explanations of size at most 
a small constant $k$. In the case of binary
classifiers over boolean domains, we showed that this definition
is equivalent to the classifier and its complement both being
expressible as $k$-DNFs. Depth-$k$ decision trees are the most
well-known example of a family of models satisfying this definition.
Decision trees are widely used either directly or as surrogate
models to provide explanations. This paper investigated the
existence of other families of interpretable models.

We introduced a graph-theoretical sufficient condition for interpretability 
in terms of maximum induced matchings of DNF formulas, before giving a novel concrete
family of interpretable models which we call nested $k$-DNFs.
We showed experimentally that a simple heuristic algorithm
produces nested $k$-DNFs whose accuracy is comparable with
depth-$k$ decision trees found by CART. 

An intriguing open question is whether there exist more general families of interpretable DNFs that could achieve better accuracy than decision trees. In contrast to decision trees of depth $k$, the property of a function being expressible as a nested $k$-DNF is not invariant under complementation in general. In addition, nested $k$-DNFs cannot contain more than $k^2$ distinct literals. These limitations come from our definitions and do not arise from fundamental technical reasons, so we believe there is ample room for further improvement. 

Finally, our observations during the experiments revealed some variability in the test accuracy of the nested $k$-DNFs across different runs.
This observation suggests that significantly better results could be achieved by using more sophisticated heuristics. In particular, it would be interesting to compare optimal nested $k$-DNFs and optimal depth-$k$ decision trees.

\section*{Acknowledgments}
This work was funded by the French National Research Agency 
(ANR) under grant agreement no. ANR-23-CE25-0009.
We would also like to thank Aur\'elie Hurault for many insightful
comments.

\bibliographystyle{plain}
\bibliography{refs}

\end{document}